\documentclass{article}

\usepackage{microtype}
\usepackage{graphicx}
\usepackage{subcaption}
\usepackage{booktabs} 
\usepackage{multirow} 

\usepackage{hyperref}

\usepackage[preprint]{icml2026}

\usepackage{amsmath}
\usepackage{amssymb}
\usepackage{mathtools}
\usepackage{amsthm}
\usepackage{makecell}

\usepackage[capitalize,noabbrev]{cleveref}

\theoremstyle{plain}
\newtheorem{theorem}{Theorem}[section]

\newtheorem{corollary}[theorem]{Corollary}
\theoremstyle{definition}
\newtheorem{definition}[theorem]{Definition}

\theoremstyle{remark}
\newtheorem{remark}[theorem]{Remark}
\newcommand{\loss}{\mathcal{L}}

\newcommand{\model}{f}
\newcommand{\params}{\theta}
\newcommand{\Ex}{\mathbb{E}}
\newcommand{\Real}{\mathbb{R}}
\newcommand{\Prob}{\mathbb{P}}

\newcommand{\MaCS}{\textit{MaCS}}
\usepackage{xspace}
\usepackage{fontawesome5} 
\usepackage{xcolor}

\usepackage[textsize=tiny]{todonotes}

\icmltitlerunning{Margin and Consistency Supervision for Calibrated and Robust Vision Models}

\begin{document}

\twocolumn[
  \icmltitle{Margin and Consistency Supervision for Calibrated and Robust Vision \\Models}



  \icmlsetsymbol{equal}{*}

  \begin{icmlauthorlist}
    \icmlauthor{Salim Khazem}{yyy}
  \end{icmlauthorlist}

  \icmlaffiliation{yyy}{Talan Research Center, Paris, France}

  \icmlcorrespondingauthor{Salim Khazem}{salim.khazem@talan.com}

  \icmlkeywords{Machine Learning, ICML}

  \vskip 0.3in
]



\printAffiliationsAndNotice{}  

\begin{abstract}
Deep vision classifiers often achieve high accuracy while remaining poorly calibrated and fragile under small distribution shifts. We present Margin and Consistency Supervision ($\textit{MaCS}$), a simple, architecture-agnostic regularization framework that jointly enforces logit-space separation and local prediction stability. $\textit{MaCS}$ augments cross-entropy with (i) a hinge-squared margin penalty that enforces a target logit gap between the correct class and the strongest competitor, and (ii) a consistency regularizer that minimizes the KL divergence between predictions on clean inputs and mildly perturbed views. We provide a unifying theoretical analysis showing that increasing classification margin while reducing local sensitivity formalized via a Lipschitz-type stability proxy yields improved generalization guarantees and a provable robustness radius bound scaling with the margin-to-sensitivity ratio. Across several image classification benchmarks and several backbones spanning CNNs and Vision Transformers, $\textit{MaCS}$ consistently improves calibration (lower ECE and NLL) and robustness to common corruptions while preserving or improving top-1 accuracy. Our approach requires no additional data, no architectural changes, and negligible inference overhead, making it an effective drop-in replacement for standard training objectives. 
\end{abstract}

\begin{figure}[t]
  \vskip 0.1in
  \begin{center}
    \centerline{\includegraphics[width=\columnwidth]{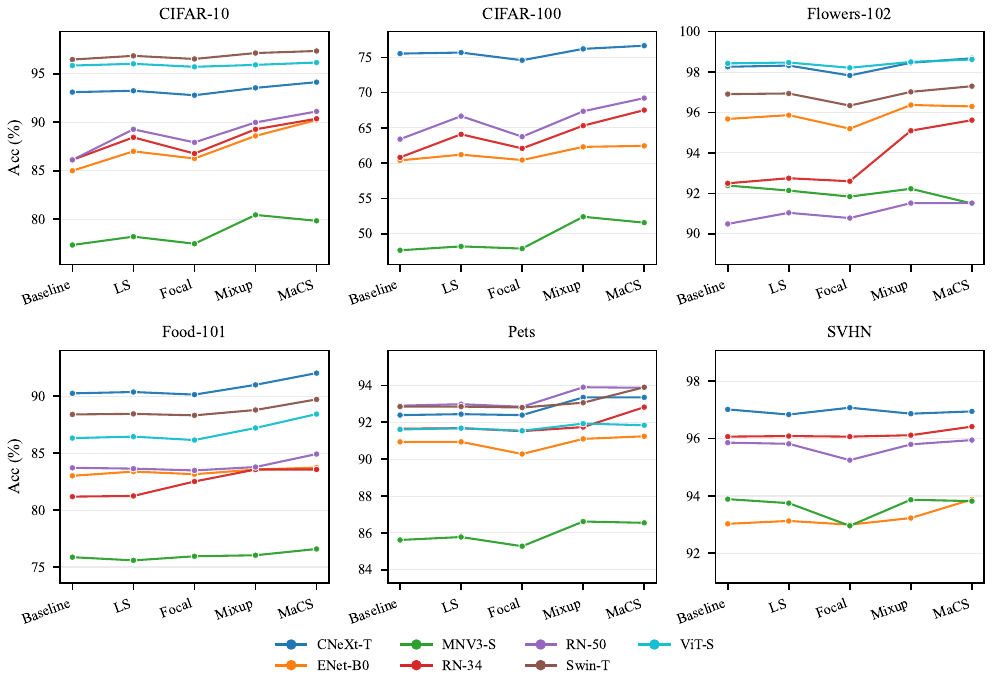}}
    \caption{Per-dataset model curves across methods. Each line corresponds to a model and traces accuracy across training objectives, highlighting method-specific gains.}
    \label{fig:model-curves}
  \end{center}
  \vskip -0.2in
\end{figure}

\section{Introduction}
\label{sec:intro}

Deep neural networks achieve impressive accuracy on vision classification benchmarks, yet they frequently exhibit problematic behaviors: overconfident predictions on ambiguous or out-of-distribution inputs \citep{guo2017calibration}, brittleness to small input perturbations \citep{szegedy2014intriguing}, and poor generalization under distribution shift \citep{hendrycks2019benchmarking}. These issues limit deployment in safety-critical applications where reliable uncertainty quantification is essential.

Classical statistical learning theory provides a compelling lens for understanding these failures: classifiers with larger \emph{margins} the gap between the score for the correct class and the highest-scoring alternative generalize better \citep{bartlett1998sample, bartlett2017spectrally}. Intuitively, larger margins provide a ``buffer zone'' that makes predictions more robust to noise and distribution shift. Similarly, \emph{locally stable} classifiers that produce consistent predictions under small input changes are inherently more robust \citep{cohen2019certified}.

Motivated by these insights, we propose \textbf{Margin and Consistency Supervision} (\MaCS), a unified training objective that encourages both properties. \MaCS{} augments cross-entropy with two complementary terms:

\begin{enumerate}
    \item \textbf{Margin Loss:} A hinge-squared penalty encouraging the logit margin $\gamma(x) = f_y(x) - \max_{j \neq y} f_j(x)$ to exceed a target threshold $\Delta$, promoting well-separated representations.
    
    \item \textbf{Consistency Loss:} A KL-divergence term measuring prediction stability under mild perturbations (noise and blur), encouraging smooth decision boundaries.
\end{enumerate}

The combined objective is simply:
\begin{equation}
    \loss_{\MaCS} = \loss_{\text{CE}} + \lambda_m \loss_{\text{margin}} + \lambda_c \loss_{\text{cons}},
\end{equation}
which can be applied to any classifier without architectural changes.

\textbf{Contributions.} Our main contributions are: (i) We propose \MaCS, a simple, architecture-agnostic regularization framework combining margin maximization and consistency regularization. (ii) We provide a unifying theoretical analysis connecting margin and local sensitivity to generalization guarantees and a provable robustness radius bound. (iii) We conduct extensive experiments on 6 datasets and 7 architectures spanning CNNs and Vision Transformers, showing consistent improvements in calibration, robustness, and accuracy. We release a fully reproducible codebase requiring no additional data or architectural changes. Code is released and available at : ~\url{https://github.com/salimkhazem/marco}.

\section{Related Work}
\label{sec:related}

\textbf{Margin-Based Learning.}
The connection between large margins and generalization has deep roots in statistical learning theory \citep{vapnik1998statistical, bartlett1998sample}. For neural networks, margin-based objectives have been explored primarily in metric learning and face recognition \citep{liu2017sphereface, deng2019arcface, wang2018cosface}, where angular margins in embedding space improve discriminability. \citet{elsayed2018large} study margin maximization for general classification but focus on adversarial robustness rather than calibration. Label smoothing \citep{szegedy2016rethinking} and focal loss \citep{lin2017focal} implicitly affect margins but do not explicitly enforce a target gap.

\textbf{Consistency Regularization.}
Consistency-based methods have proven highly effective for semi-supervised learning \citep{berthelot2019mixmatch, sohn2020fixmatch}, where they enforce that predictions remain stable under data augmentation. Similar principles appear in self-training \citep{xie2020self}, domain adaptation \citep{french2018selfensembling}, and knowledge distillation \citep{hinton2015distilling}. Mixup \citep{zhang2018mixup} and its variants encourage smooth interpolations between training examples. We adapt consistency regularization for fully-supervised learning, using it to encourage smooth decision boundaries via explicit KL-divergence penalties.

\textbf{Robustness and Calibration.}
Adversarial training \citep{madry2018towards} and certified defenses \citep{cohen2019certified, lecuyer2019certified} address worst-case robustness but often sacrifice clean accuracy. Calibration methods include post-hoc temperature scaling \citep{guo2017calibration}, explicit calibration losses \citep{kumar2018trainable, mukhoti2020calibrating}, and mixup variants \citep{thulasidasan2019mixup}. \MaCS{} addresses accuracy, robustness, and calibration jointly through a single training objective without requiring separate post-hoc calibration.

\textbf{Consistency-Based Robustness.} Virtual Adversarial Training (VAT) \citep{miyato2018virtual} uses adversarial perturbations with KL consistency for semi-supervised learning. TRADES \citep{zhang2019theoretically} extends this to adversarial robustness by balancing clean and adversarial accuracy. Recent work DiGN \citep{tsiligkaridis2023diverse} applies Gaussian noise consistency in fully supervised settings with theoretical control over gradient norms. In contrast to worst-case (adversarial) perturbations, \MaCS{} uses mild perturbations to encourage local smoothness while simultaneously enforcing explicit logit-margin constraints, targeting calibration and corruption robustness rather than adversarial robustness.

\textbf{Data Augmentation for Robustness.} AugMix \citep{hendrycks2020augmix} combines diverse augmentation chains with Jensen-Shannon consistency and is a strong baseline for corruption robustness. Subsequent methods including DeepAugment
\citep{hendrycks2021many}, PixMix \citep{hendrycks2022pixmix}, and AugMax \citep{wang2021augmax} further improve robustness through sophisticated augmentation strategies. IPMix \citep{diffenderfer2024ipmix} extends these ideas with
improved augmentation pipelines. These methods are complementary to \MaCS{}: we show that combining \MaCS{} with AugMix yields additive improvements (Table~\ref{tab:robustness}), suggesting \MaCS{} can serve as a base layer for
augmentation-based robustness stacks. Notably, AugMix explicitly avoids overlap with CIFAR-C corruptions in its augmentation set, while \MaCS{} uses simple noise/blur; we analyze this overlap in Section~\ref{sec:robustness}.

\textbf{Certified and Lipschitz Methods.} Certified Radius Maximization (CRM) \citep{zhai2020macer} and Lipschitz-constrained networks \citep{tsuzuku2018lipschitz} provide formal guarantees by controlling model smoothness. While \MaCS{}
does not provide certified robustness, our theoretical framework (Section~\ref{sec:theory}) shares the intuition that controlling local sensitivity enables robustness; \MaCS{} trades formal guarantees for practicality and preserved
accuracy.

\section{Method}
Figure~\ref{fig:method} provides an overview of the \MaCS{} training pipeline.
\label{sec:method}
  \begin{figure}[t]
    \centering
    \includegraphics[width=\columnwidth]{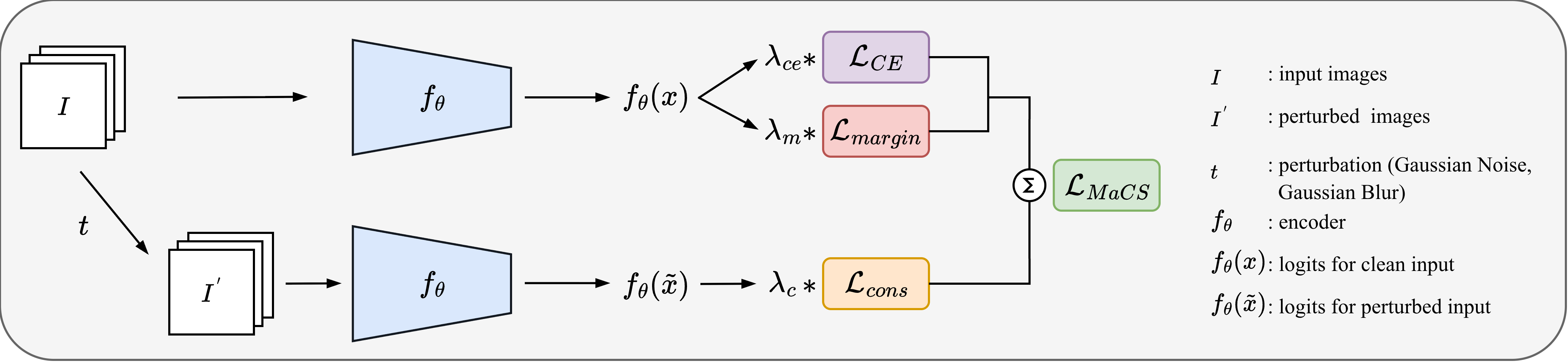}
    \caption{\textbf{Overview of \MaCS{} training.} The model processes both clean input $x$ and perturbed input $\tilde{x} = T(x)$. The total loss combines cross-entropy, a margin penalty encouraging $\gamma(x) \geq \Delta$, and a
  KL-based consistency term enforcing prediction stability.}
    \label{fig:method}
  \end{figure}
  
\subsection{Problem Setup}
Consider a classifier $\model(x; \params): \Real^d \to \Real^K$ mapping inputs to logits over $K$ classes. For a labeled example $(x, y)$ with $y \in \{1, \ldots, K\}$, the standard cross-entropy loss is:
\begin{equation}
    \loss_{\text{CE}}(x, y) = -\log \frac{\exp(f_y(x))}{\sum_{j=1}^K \exp(f_j(x))}.
\end{equation}

\subsection{Margin Loss}

The \emph{logit margin} for a sample $(x, y)$ is the gap between the correct class logit and the strongest competitor:
\begin{equation}
    \gamma(x) = f_y(x) - \max_{j \neq y} f_j(x).
\end{equation}
Positive margins indicate correct classification; larger margins suggest more confident and robust predictions.

We encourage margins to exceed a target $\Delta > 0$ using a hinge-squared penalty:
\begin{equation}
    \loss_{\text{margin}}(x, y) = \max(0, \Delta - \gamma(x))^2.
    \label{eq:margin_loss}
\end{equation}
The squared form provides smooth gradients and penalizes small margins more aggressively than a linear hinge. We use $\Delta = 1$ in all experiments.

\subsection{Consistency Loss}

To encourage local stability, we measure the KL divergence between predictions on clean and perturbed inputs:
\begin{equation}
    \loss_{\text{cons}}(x) = D_{\text{KL}}\big(p(x) \,\|\, p(\tilde{x})\big),
    \label{eq:cons_loss}
\end{equation}
where $p(x) = \text{softmax}(f(x))$ and $\tilde{x} = T(x)$ is a perturbed version of $x$.

The perturbation $T$ applies mild transformations that preserve semantic content:
\begin{itemize}
    \item Gaussian noise: $\tilde{x} = x + \epsilon$, $\epsilon \sim \mathcal{N}(0, \sigma^2 I)$ with $\sigma = 0.1$
    \item Gaussian blur: kernel size $3 \times 3$, $\sigma \in [0.1, 2.0]$
\end{itemize}

\subsection{Combined \MaCS{} Objective}

The complete \MaCS{} objective is:
\begin{equation}
    \loss_{\MaCS} = \loss_{\text{CE}} + \lambda_m \loss_{\text{margin}} + \lambda_c \loss_{\text{cons}},
    \label{eq:macs}
\end{equation}
where $\lambda_m, \lambda_c > 0$ control the regularization strength. We use $\lambda_m = 0.1$ and $\lambda_c = 0.5$ as defaults, tuned once on CIFAR-100 and fixed for all experiments.

\begin{algorithm}[t]
\caption{\MaCS{} Training}
\label{alg:macs}
\begin{algorithmic}[1]
\REQUIRE Model $f_\theta$, dataset $\mathcal{D}$, hyperparameters $\Delta, \lambda_m, \lambda_c$
\FOR{each mini-batch $(X, Y) \sim \mathcal{D}$}
    \STATE $Z \gets f_\theta(X)$ \COMMENT{Forward pass}
    \STATE $\loss_{\text{CE}} \gets \text{CrossEntropy}(Z, Y)$
    \STATE $\gamma \gets Z_Y - \max_{j \neq Y} Z_j$ \COMMENT{Margins}
    \STATE $\loss_{\text{margin}} \gets \text{mean}(\max(0, \Delta - \gamma)^2)$
    \STATE $\tilde{X} \gets T(X)$ \COMMENT{Perturb inputs}
    \STATE $\tilde{Z} \gets f_\theta(\tilde{X})$
    \STATE $\loss_{\text{cons}} \gets D_{\text{KL}}(\text{softmax}(Z) \| \text{softmax}(\tilde{Z}))$
    \STATE $\loss \gets \loss_{\text{CE}} + \lambda_m \loss_{\text{margin}} + \lambda_c \loss_{\text{cons}}$
    \STATE Update $\theta$ via gradient descent on $\loss$
\ENDFOR
\end{algorithmic}
\end{algorithm}

\section{Theoretical Motivation}
\label{sec:theory}

We provide theoretical grounding for \MaCS{} by connecting margin maximization and consistency regularization to generalization and robustness. Our analysis reveals that the \emph{margin-to-sensitivity ratio} governs certified robustness, motivating both components of our objective.

\subsection{Margin and Generalization}

Large margins have long been associated with better generalization \citep{bartlett1998sample, bartlett2017spectrally}. We recall a classical margin-based bound adapted to neural networks.

\begin{definition}[Spectral Complexity]
\label{def:spectral}
For a neural network $\model$ with $L$ layers and weight matrices $W_1, \ldots, W_L$, the spectral complexity is
\[
R_\model = \left(\prod_{\ell=1}^{L} \|W_\ell\|_2\right) \cdot \left(\sum_{\ell=1}^{L} \frac{\|W_\ell\|_F^{2/3}}{\|W_\ell\|_2^{2/3}}\right)^{3/2},
\]
where $\|W\|_2$ denotes the spectral norm and $\|W\|_F$ the Frobenius norm.
\end{definition}

\begin{theorem}[Margin-Based Generalization {\citep[Theorem 1.1]{bartlett2017spectrally}}]
\label{thm:margin}
Let $\model: \Real^d \to \Real^K$ be a neural network with spectral complexity $R_\model$, and let $\mathcal{D}$ be a distribution over $\Real^d \times [K]$. For any margin $\gamma > 0$, with probability at least $1 - \delta$ over an i.i.d.\ training set $S$ of size $n$ drawn from $\mathcal{D}$:
\[
\Prob_{(x,y) \sim \mathcal{D}}[\gamma(x) \leq 0] \leq \hat{L}_\gamma(S) + \tilde{O}\left(\frac{R_\model B}{\gamma \sqrt{n}}\right) + \sqrt{\frac{\log(1/\delta)}{2n}},
\]
where $\hat{L}_\gamma(S)$ is the fraction of training samples with margin less than $\gamma$, $B$ is a bound on the input norm $\|x\|_2 \leq B$, and $\tilde{O}$ hides logarithmic factors in $n$, $L$, and $K$.
\end{theorem}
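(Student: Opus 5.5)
The plan is to reproduce the Bartlett--Foster--Telgarsky argument. It has three stages: reduce the margin loss to a Lipschitz surrogate, bound the Rademacher complexity of that surrogate class through covering numbers, and then use Dudley's entropy integral together with a union bound over the scale of the complexity.

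\textbf{Step 1 (ramp-loss reduction).} For the fixed $\gamma>0$, I will define the ramp function $\ell_\gamma(t)=\min(1,\max(0,1-t/\gamma))$. It is $1/\gamma$-Lipschitz and satisfies
\[
\mathbf{1}[t\le 0]\le \ell_\gamma(t)\le \mathbf{1}[t<\gamma].
\]
Applying it to the margin operator $\mathcal{M}(f(x),y)=f_y(x)-\max_{j\neq y}f_j(x)$ gives
\[
\Prob[\gamma(x)\le 0]\le \Ex[\ell_\gamma(\gamma(x))]
\quad\text{and}\quad
\tfrac1n\sum_i \ell_\gamma(\gamma(x_i))\le \hat L_\gamma(S).
\]
The standard Rademacher uniform-convergence bound for $[0,1]$-valued losses then yields
\[
\Prob[\gamma(x)\le 0]\le \hat L_\gamma(S)+2\hat{\mathfrak{R}}_S(\ell_\gamma\circ\mathcal{M}\circ\mathcal{F})+3\sqrt{\log(2/\delta)/(2n)},
\]
with constants to be absorbed. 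The margin operator is $2$-Lipschitz with respect to $\|\cdot\|_\infty$, hence with respect to $\|\cdot\|_2$. Covers of $\mathcal{F}$ at scale $\epsilon\gamma/2$ therefore give covers of the loss class at scale $\epsilon$.

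\textbf{Step 2 (covering the network class).} This is the main obstacle. I first need a matrix covering lemma. Via Maurey's sparsification lemma, the class $\{XA:\|A\|_{2,1}\le a\}$ (or the Frobenius version, matching Definition~\ref{def:spectral}) admits an $\epsilon$-cover in Frobenius norm on the data matrix $X$ of log-size
\[
\lceil a^2\|X\|_F^2/\epsilon^2\rceil\,\log(2dm).
\]
I then cover layer by layer. Fix covers of each $W_\ell$ at scales $\epsilon_\ell$, and propagate the errors using the $1$-Lipschitz activations and the spectral norms of the later layers. The total error then telescopes to
\[
\sum_\ell \epsilon_\ell \prod_{k>\ell}\|W_k\|_2 .
\]
The key choice is the allocation of the $\epsilon_\ell$. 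I will take $\epsilon_\ell\propto \alpha_\ell\,\epsilon/\prod_{k>\ell}\|W_k\|_2$, where $\alpha_\ell\propto(\|W_\ell\|_F/\|W_\ell\|_2)^{2/3}$. Optimizing by Lagrange multipliers (Hölder) produces exactly the $3/2$-power sum, giving
\[
\log\mathcal{N}\lesssim \frac{B^2 n\,R_f^2\log(\mathrm{width})}{\epsilon^2}.
\]
Here I use $\|X\|_F\le B\sqrt n$, and the intermediate-layer inputs have norm at most $B\prod_{k<\ell}\|W_k\|_2$.

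\textbf{Step 3 (Dudley and union bound).} I plug $\log\mathcal{N}(\epsilon)\lesssim n B^2R_f^2\log W/(\gamma^2\epsilon^2)$ into Dudley's integral. The lower limit is $\alpha\approx 1/\sqrt n$, and this integral produces the $\log n$ factor:
\[
\hat{\mathfrak R}\lesssim \frac{4}{\sqrt n}\cdot\frac1{\sqrt n}+\frac{12}{\sqrt n}\int_{1/\sqrt n}^{1}\frac{B R_f\sqrt{n\log W}}{\gamma\epsilon\sqrt n}\,d\epsilon
=\tilde O\!\left(\frac{R_fB}{\gamma\sqrt n}\right).
\]
Finally, since $R_f$ is data-dependent rather than fixed in advance, I will apply the bound over a dyadic grid of complexity budgets and union-bound with weights $\delta/2^{i}$. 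This costs only further logarithmic factors, which $\tilde O$ hides. Any residual dependence on $K$ enters only through the output width in $\log(\mathrm{width})$, consistent with the statement.
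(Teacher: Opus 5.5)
The paper gives no proof of this statement. It only cites Bartlett--Foster--Telgarsky (BFT). Your outline is exactly BFT's argument: the ramp loss with the $2$-Lipschitz margin operator, Maurey covering of $\{XA\}$ composed layer by layer with the $\epsilon_\ell$ allocation that yields the $3/2$-power sum, then Dudley and a union bound.

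\textbf{The gap: the Frobenius claim in Step~2 is false.} Maurey's lemma needs $XA$ to lie in $a\|X\|_F$ times the convex hull of the $2dm$ atoms $\pm Xe_ie_j^\top/\|Xe_i\|_2$. That requires $\sum_{i,j}|A_{ij}|\,\|Xe_i\|_2\le a\|X\|_F$. Cauchy--Schwarz gives this from $\sum_j\|A_{:,j}\|_2=\|A\|_{2,1}\le a$. From $\|A\|_F\le a$, however, you only get it with $a$ replaced by $\sqrt{m}\,a$, so the log-cover picks up a factor $m$. This is the $m^{2/r}$ term in BFT's Lemma~3.2 with $r=s=2$.

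The loss is unavoidable. Take $n=d$ and $X=I_d$. Then $\{XA:\|A\|_F\le a\}$ is a Euclidean ball of radius $a$ in $\mathbb{R}^{dm}$, whose $a/2$-covering number is at least $2^{dm}$. Your claimed bound at that scale is $4d\ln(2dm)$, which is smaller for large $m$.

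\textbf{What your argument actually proves.} Carried out correctly, it gives the bound with BFT's complexity $\prod_\ell\|W_\ell\|_2\bigl(\sum_\ell\|W_\ell^\top\|_{2,1}^{2/3}/\|W_\ell\|_2^{2/3}\bigr)^{3/2}$, where $\|W_\ell^\top\|_{2,1}$ is the sum of the row norms of $W_\ell$. That is what the cited Theorem~1.1 states. With the Frobenius norms of Definition~\ref{def:spectral}, you get an extra $\sqrt{\max_\ell m_\ell}$ factor, which is polynomial in width and not hidden by $\tilde O$. Since $\|W\|_F\le\|W^\top\|_{2,1}$, the statement as written is stronger than the cited result. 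It follows neither from that result nor from your route. You must either switch the complexity measure to the $(2,1)$ norm or keep the width factor.

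\textbf{Smaller points.}
\begin{enumerate}
\item \emph{The union bound.} The $\epsilon_\ell$ allocation depends on every per-layer budget $(s_\ell,b_\ell)$, so the union bound must run over a grid in all $2L$ budgets, not over the scalar $R_f$. A dyadic grid on each $s_\ell$ can inflate $\prod_\ell s_\ell$ by $2^L$, which is not a logarithmic loss. The spectral-norm grid needs ratio $1+O(1/L)$. Dyadic rounding of $b_\ell/s_\ell$ is harmless, because the $3/2$-power sum is $1$-homogeneous.
\item \emph{The layerwise cover.} It must be built sequentially: the cover for layer $\ell$ is taken for each already chosen approximate input $\bar X_{\ell-1}$. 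Maurey covers $\{XA\}$ for a given $X$, not the weight matrices themselves.
\item \emph{The width logarithm.} The $\ln(2dm)$ factors involve $d$ and the hidden widths, not only $K$. So they are not among the logarithms the statement lets $\tilde O$ hide, and your claim of consistency with the statement does not hold.
\item \emph{The $\delta$-term.} Its constant sits outside $\tilde O$ and cannot simply be absorbed. Use the expected rather than the empirical Rademacher complexity; your covering bound holds for every sample because $\|x\|_2\le B$. This gives exactly $\sqrt{\log(1/\delta)/(2n)}$. Then push the union-bound overhead into the $\tilde O$ term, using $\gamma\le 2\prod_\ell\|W_\ell\|_2B\le 2R_fB$ whenever $\hat L_\gamma(S)<1$.
\end{enumerate}
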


This bound decreases with larger margins $\gamma$. By enforcing $\gamma(x) \geq \Delta$ on training samples, $\loss_{\text{margin}}$ directly reduces $\hat{L}_\gamma(S)$ for $\gamma \leq \Delta$, tightening the generalization bound.

\subsection{Consistency as a Sensitivity Proxy}

We formalize local sensitivity and connect it to the consistency loss.

\begin{definition}[Local Sensitivity]
\label{def:sensitivity}
For a classifier $\model$ with output probabilities $p(x) = \mathrm{softmax}(\model(x))$ and a perturbation distribution $\mathcal{P}$ over $\Real^d$, the local sensitivity at $x$ is
\[
S(x) = \Ex_{\epsilon \sim \mathcal{P}}\left[\|p(x) - p(x + \epsilon)\|_1\right].
\]
\end{definition}

\begin{remark}[Consistency Controls Sensitivity]
\label{rem:sensitivity}
For a single perturbation $\tilde{x} = x + \epsilon$, Pinsker's inequality gives
\[
\|p(x) - p(\tilde{x})\|_1 \leq \sqrt{2 D_{\mathrm{KL}}(p(x) \| p(\tilde{x}))} = \sqrt{2 \loss_{\mathrm{cons}}(x)}.
\]
Since our training procedure samples $\tilde{x} \sim T(x)$ and minimizes $\Ex[\loss_{\mathrm{cons}}]$, Jensen's inequality implies that minimizing the expected consistency loss reduces the local sensitivity $S(x)$.
\end{remark}

\subsection{Robustness via Margin-to-Sensitivity Ratio}

We now present our main theoretical result: the robustness radius is controlled by the ratio of margin to local Lipschitz constant. Throughout, $\|\cdot\|$ denotes the $\ell_2$ norm.

\begin{theorem}[Margin-Stability Robustness Radius]
\label{thm:robustness}
Fix an input-label pair $(x, y)$ with positive margin $\gamma(x) = f_y(x) - \max_{j \neq y} f_j(x) > 0$. For each $j \neq y$, define the logit-difference function $g_{y,j}(z) = f_y(z) - f_j(z)$. Suppose there exist $r > 0$ and $L_g(x) > 0$ such that for all $z$ with $\|z - x\| \leq r$,
\[
|g_{y,j}(z) - g_{y,j}(x)| \leq L_g(x) \|z - x\| \quad \forall j \neq y.
\]
Then the predicted class is invariant under all perturbations $\delta$ satisfying
\[
\|\delta\| < \min\left\{r,\; \frac{\gamma(x)}{L_g(x)}\right\}.
\]
\end{theorem}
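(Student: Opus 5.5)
The plan is to show directly that every competitor logit stays strictly below the true-class logit throughout the ball $\|\delta\| < \min\{r, \gamma(x)/L_g(x)\}$. This means $y$ remains the unique argmax of $f(x+\delta)$, which is what invariance of the predicted class requires.

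\textbf{Step 1 (reduction to pairwise margins).} The predicted class at $z$ equals $y$ exactly when $g_{y,j}(z) > 0$ for every $j \neq y$. At the base point we have, for every $j \neq y$,
\[
g_{y,j}(x) = f_y(x) - f_j(x) \geq f_y(x) - \max_{k \neq y} f_k(x) = \gamma(x) > 0,
\]
since $\max_{k\neq y} f_k(x) \geq f_j(x)$. Hence $y$ is the predicted class at $x$. It therefore suffices to lower-bound each $g_{y,j}(x+\delta)$ uniformly in $j$.

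\textbf{Step 2 (apply the local Lipschitz hypothesis).} Fix $\delta$ with $\|\delta\| < \min\{r, \gamma(x)/L_g(x)\}$ and set $z = x + \delta$. Since $\|z - x\| = \|\delta\| < r$, the hypothesis applies to $z$ for every $j \neq y$. This gives
\[
g_{y,j}(z) \geq g_{y,j}(x) - L_g(x)\|\delta\| \geq \gamma(x) - L_g(x)\|\delta\|.
\]
The second factor in the minimum gives $L_g(x)\|\delta\| < \gamma(x)$, using $L_g(x) > 0$. So the right-hand side is strictly positive, and thus $f_y(z) > f_j(z)$ for all $j \neq y$.

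\textbf{Step 3 (conclude).} Because the bound in Step 2 holds simultaneously for all $j \neq y$, $\arg\max_k f_k(x+\delta) = \{y\}$, and the prediction equals the prediction at $x$.

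The only points that need care are modest ones. The first is the role of $r$: the Lipschitz bound is only assumed on the closed ball of radius $r$, so the radius must be capped by $r$. The second is keeping the inequality strict, so that ties cannot make the argmax ambiguous. No deeper obstacle is expected. I would also briefly remark that the uniform constant $L_g(x)$ is what allows a single radius to serve all competitors. One could sharpen this to $\min_j g_{y,j}(x)/L_{g_{y,j}}(x)$, but that refinement is not needed for the statement.
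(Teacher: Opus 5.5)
Your proposal is correct and follows essentially the same route as the paper's proof: lower-bound each pairwise gap $g_{y,j}(x+\delta)$ by $g_{y,j}(x) - L_g(x)\|\delta\|$ via the local Lipschitz hypothesis, use $g_{y,j}(x) \geq \gamma(x)$, and conclude strict positivity for all $j \neq y$. You also explicitly check $\|\delta\| < r$ so the hypothesis applies, which the paper's proof leaves implicit.
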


\begin{proof}
Let $x' = x + \delta$ with $\|\delta\| < \gamma(x)/L_g(x)$. For any $j \neq y$:
\begin{align*}
g_{y,j}(x') &= g_{y,j}(x) + \bigl(g_{y,j}(x') - g_{y,j}(x)\bigr) \\
&\geq g_{y,j}(x) - |g_{y,j}(x') - g_{y,j}(x)| \\
&\geq g_{y,j}(x) - L_g(x)\|\delta\|.
\end{align*}
By definition of the margin, $g_{y,j}(x) = f_y(x) - f_j(x) \geq \gamma(x)$ for all $j \neq y$. Hence,
\[
g_{y,j}(x') \geq \gamma(x) - L_g(x)\|\delta\| > 0,
\]
where the strict inequality follows from $\|\delta\| < \gamma(x)/L_g(x)$. Thus $f_y(x') > f_j(x')$ for all $j \neq y$, so $\arg\max_k f_k(x') = y$.
\end{proof}

\begin{corollary}[Radius Under Lipschitz Logits]
\label{cor:linf-radius}
Suppose additionally that for some $L_f(x) > 0$,
\[
\|f(z) - f(x)\|_\infty \leq L_f(x) \|z - x\| \quad \text{for all } \|z - x\| \leq r.
\]
Then the predicted class is invariant for all perturbations with $\|\delta\| < \min\{r,\, \gamma(x)/(2L_f(x))\}$.
\end{corollary}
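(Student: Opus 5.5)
The plan is to reduce the corollary to Theorem~\ref{thm:robustness} by showing that the $\ell_\infty$-Lipschitz condition on the full logit vector makes each logit-difference function $g_{y,j}$ Lipschitz with constant $L_g(x) = 2L_f(x)$ on the same ball of radius $r$. Once that is in place, Theorem~\ref{thm:robustness} yields invariance for all $\|\delta\| < \min\{r, \gamma(x)/L_g(x)\} = \min\{r, \gamma(x)/(2L_f(x))\}$, which is exactly the claim.

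The key step is the following bound. Fix $j \neq y$ and any $z$ with $\|z - x\| \leq r$. Writing $g_{y,j}(z) - g_{y,j}(x) = (f_y(z) - f_y(x)) - (f_j(z) - f_j(x))$ and applying the triangle inequality gives
\[
|g_{y,j}(z) - g_{y,j}(x)| \leq |f_y(z) - f_y(x)| + |f_j(z) - f_j(x)| \leq 2\|f(z) - f(x)\|_\infty \leq 2L_f(x)\|z - x\|.
\]
This holds uniformly in $j \neq y$, so the hypothesis of Theorem~\ref{thm:robustness} is satisfied with $L_g(x) = 2L_f(x) > 0$ and the same $r$. The standing assumption $\gamma(x) > 0$ is inherited from the theorem's setting.

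I expect no real obstacle; the argument is a direct substitution. The one point to check is that the proof of Theorem~\ref{thm:robustness} only evaluates $g_{y,j}$ at $x' = x + \delta$ with $\|\delta\| < r$, so the Lipschitz bound is used only inside its ball of validity. The explicit $\min$ with $r$ guarantees this.
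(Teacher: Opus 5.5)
Your proposal is correct and follows the paper's proof essentially verbatim: split $g_{y,j}(z)-g_{y,j}(x)$ into the two coordinate differences, bound each by $\|f(z)-f(x)\|_\infty$ to obtain the constant $2L_f(x)$, and invoke Theorem~\ref{thm:robustness}. Your closing check, that the Lipschitz bound is only needed at $x+\delta$ with $\|\delta\|<r$, is a point the paper's proof of the theorem leaves implicit, so it is worth keeping.
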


\begin{proof}
For any $j \neq y$ and $z$ with $\|z - x\| \leq r$:
\begin{align*}
|g_{y,j}(z) - g_{y,j}(x)| &= |(f_y(z) - f_j(z)) - (f_y(x) - f_j(x))| \\
&\leq |f_y(z) - f_y(x)| + |f_j(z) - f_j(x)| \\
&\leq 2\|f(z) - f(x)\|_\infty \\
&\leq 2L_f(x)\|z - x\|.
\end{align*}
Thus $L_g(x) \leq 2L_f(x)$, and the result follows from Theorem~\ref{thm:robustness}.
\end{proof}

\subsection{Connecting \MaCS{} to the Robustness Radius}

The robustness radius in Theorem~\ref{thm:robustness} scales with the ratio $\gamma(x)/L_g(x)$. \MaCS{} targets both quantities:

\begin{enumerate}
\item \textbf{Margin maximization:} The loss $\loss_{\mathrm{margin}}$ directly increases $\gamma(x)$ by penalizing samples with $\gamma(x) < \Delta$.

\item \textbf{Sensitivity reduction:} The loss $\loss_{\mathrm{cons}}$ penalizes prediction changes under perturbations in probability space.
\end{enumerate}

\begin{remark}[On the Theory-Practice Gap]
\label{rem:gap}
Theorem~\ref{thm:robustness} requires bounding $L_g(x)$, the Lipschitz constant of logit differences. The consistency loss $\loss_{\mathrm{cons}}$ operates on softmax probabilities rather than logits directly. While softmax is Lipschitz with respect to its inputs, the relationship between $D_{\mathrm{KL}}(p(x) \| p(\tilde{x}))$ and $L_g(x)$ is indirect: small KL divergence on sampled perturbations does not immediately imply a small worst-case Lipschitz constant.

We therefore treat $\loss_{\mathrm{cons}}$ as an \emph{empirical proxy} for local sensitivity. In Section~\ref{sec:analysis}, we verify this connection by measuring both the margin $\gamma(x)$ and a sensitivity estimate $\hat{L}(x) = \Ex_{\epsilon}[\|f(x+\epsilon) - f(x)\|_\infty / \|\epsilon\|]$ on held-out data, confirming that \MaCS{} improves the ratio $\gamma(x)/\hat{L}(x)$ compared to baseline training.
\end{remark}


\section{Experiments}
\label{sec:experiments}
We evaluate \MaCS{} across diverse datasets and architectures, focusing on three key metrics: accuracy, calibration (ECE, NLL), and robustness to common corruptions.

\subsection{Experimental Setup}

\textbf{Datasets:} We evaluate on six publicly available datasets spanning diverse domains.\textbf{CIFAR-10/100} \citep{krizhevsky2009learning} (32$\times$32 natural images with 10/100 classes), \textbf{SVHN} \citep{netzer2011reading} (street view house numbers, 10 digit classes), \textbf{Oxford Pets} \citep{parkhi2012cats} (37 pet breed categories), \textbf{Food-101} \citep{bossard2014food} (101 food categories), and \textbf{Flowers-102} \citep{nilsback2008automated} (102 flower species).

\textbf{Models.} We benchmark seven architectures spanning CNNs and Vision Transformers: \textbf{CNNs} include ResNets \citep{he2016deep}, ConvNeXt-Tiny \citep{liu2022convnet}, EfficientNet-B0 \citep{tan2019efficientnet}, and MobileNetV3-Small~\citep{howard2019searching}; \textbf{Transformers} include ViT-Small/16 \citep{dosovitskiy2021image} and Swin-Tiny \citep{liu2021swin}.

\textbf{Baselines.} We compare against: Cross-entropy (CE), Label Smoothing ($\epsilon=0.1$), Focal Loss ($\gamma=2$), and Mixup ($\alpha=0.2$).

\textbf{Training.} All models use mixed precision (AMP), cosine learning rate schedule with linear warmup, and ImageNet-pretrained initialization. \MaCS{} hyperparameters ($\Delta=1$, $\lambda_m=0.1$, $\lambda_c=0.5$) are tuned once on CIFAR-100 validation and fixed across all experiments. We report mean $\pm$ std over 3 random seeds.

\textbf{Metrics.} We evaluate: (1) Top-1 test accuracy, (2) Expected Calibration Error (ECE) with 15 bins \citep{naeini2015obtaining}, (3) Negative Log-Likelihood (NLL), and (4) mean accuracy under synthetic corruptions following \citet{hendrycks2019benchmarking}.

\subsection{Main Results}

\begin{table}[t]
\centering
\caption{Comparison on CIFAR-10/100 with ResNet-50. \MaCS{} achieves the best accuracy and robustness. Results: mean$_{\pm\text{std}}$ over 3 seeds.}
\label{tab:cifar_main}
\small
\setlength{\tabcolsep}{4pt}
\begin{tabular}{lccc}
\toprule
\textbf{Method} & \textbf{Accuracy} $\uparrow$ & \textbf{Robust} $\uparrow$ \\
\midrule
\multicolumn{3}{c}{\textit{CIFAR-10}} \\
\midrule
Baseline (CE) & 87.63$_{\pm 0.51}$ & 39.92$_{\pm 0.33}$ \\
Focal Loss & 87.35$_{\pm 0.67}$ & 40.94$_{\pm 0.95}$ \\
Label Smoothing & 88.85$_{\pm 0.54}$ & 41.87$_{\pm 0.53}$ \\
Mixup & 89.63$_{\pm 0.60}$ & 42.68$_{\pm 0.36}$ \\
\textbf{MaCS (Ours)} & \textbf{91.10$_{\pm 0.55}$} & \textbf{43.48$_{\pm 0.28}$} \\
\midrule
\multicolumn{3}{c}{\textit{CIFAR-100}} \\
\midrule
Baseline (CE) & 63.41$_{\pm 0.70}$ & 20.00$_{\pm 0.33}$ \\
Focal Loss & 62.93$_{\pm 0.98}$ & 20.85$_{\pm 0.86}$ \\
Label Smoothing & 65.38$_{\pm 1.44}$ & 22.26$_{\pm 0.58}$ \\
Mixup & 66.34$_{\pm 1.16}$ & 23.23$_{\pm 0.51}$ \\
\textbf{MaCS (Ours)} & \textbf{69.23$_{\pm 0.98}$} & \textbf{24.60$_{\pm 0.64}$} \\
\bottomrule
\end{tabular}
\end{table}

Table~\ref{tab:cifar_main} shows that \MaCS{} achieves the best  accuracy on both CIFAR-10 (91.10\%) and CIFAR-100 (69.23\%) with ResNet-50, outperforming all baseline methods. While Mixup achieves good corruption robustness, \MaCS{} provides a strong accuracy-robustness trade-off.
Figure~\ref{fig:model-curves} presents test accuracy across all datasets, seven models, and five methods. \MaCS{} achieves the highest accuracy in \textbf{27 out of 38} dataset-model configurations (71\%) and ties for best in 3 more, demonstrating broad effectiveness. The gains are particularly pronounced on CIFAR-10/100, Flowers-102, and Food-101, where \MaCS{} outperforms all baselines including Mixup. Mixup is the most frequent runner-up (5 wins and 3 ties). Figure~\ref{fig:model-curves} summarizes method profiles across models.

On CIFAR-10, \MaCS{} achieves the best accuracy across all architectures except MobileNetV3-Small, with notable improvements on EfficientNet-B0 ($+5.2$pp over baseline) and ResNet-50 ($+5.0$pp). On CIFAR-100, \MaCS{} yields substantial gains on ResNet-34 ($+6.7$pp) and ResNet-50 ($+5.8$pp). On Food-101, \MaCS{} is the best method on all seven architectures.

The only consistent failure case is MobileNetV3-Small, where Mixup tends to outperform \MaCS{}. We hypothesize that the small model capacity limits the ability to simultaneously optimize margin and consistency objectives; we discuss this further in Section~\ref{sec:analysis}.

\begin{figure}[h]
  \begin{center}
    \centerline{\includegraphics[width=\columnwidth]{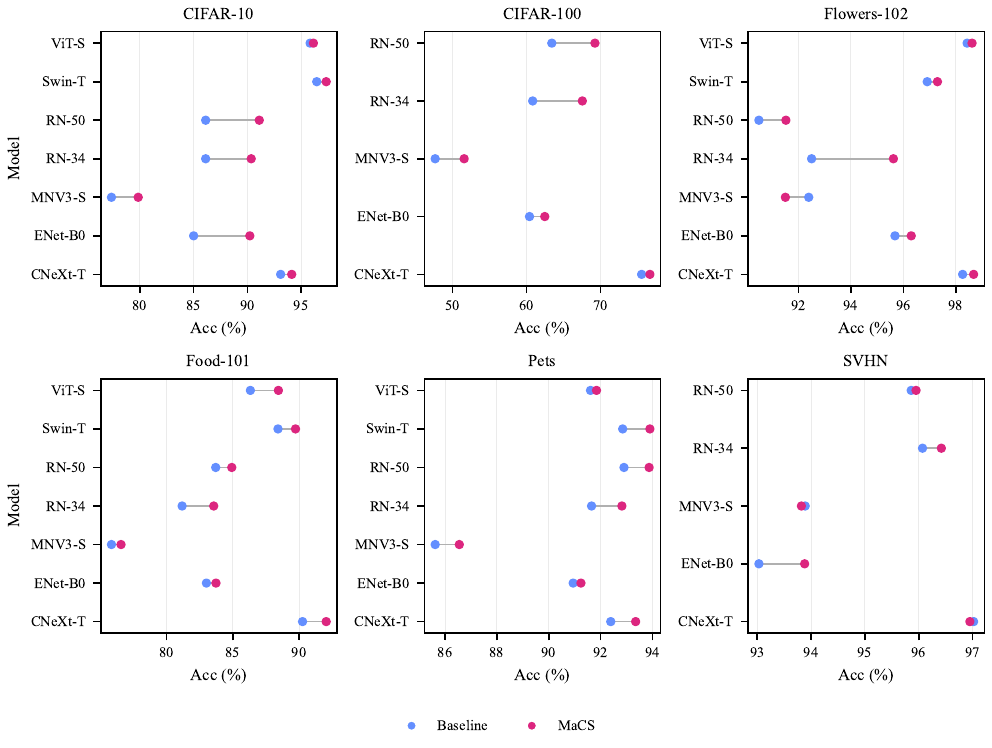}}
    \caption{Accuracy improvement of \MaCS{} over baseline (cross-entropy) across all dataset--model configurations. Each line connects the baseline (left) to \MaCS{} (right) accuracy. \MaCS{} improves over baseline in the large majority of settings, with the largest gains on CIFAR and Food-101.}
    \label{fig:dumbbell}
  \end{center}
\end{figure}

  \begin{table}[t]
  \caption{Perturbation ablation on CIFAR-100 (ResNet-50). Combining noise and blur yields the best robustness.}
  \label{tab:perturbation}
  \centering
  \small
  \begin{tabular}{lccc}
  \toprule
  Perturbation & Acc $\uparrow$ & ECE $\downarrow$ & Robust $\uparrow$ \\
  \midrule
  None (CE only) & 63.41 & 24.57 & 20.00 \\
  Noise only & 67.82 & 4.21 & 23.14 \\
  Blur only & 68.15 & 3.89 & 23.52 \\
  \textbf{Noise + Blur} & \textbf{69.23} & \textbf{3.13} & \textbf{24.60} \\
  \bottomrule
  \end{tabular}
  \end{table}

\subsection{Calibration Analysis}
A key benefit of \MaCS{} is improved calibration without post-hoc adjustment. Table~\ref{tab:calibration} summarizes ECE and NLL averaged across models and seeds on CIFAR-10/100. Relative to the baseline, \MaCS{} reduces ECE by 6.6pp on CIFAR-10 and 21.4pp on CIFAR-100, and improves NLL in both cases. Unlike prior work where calibration methods trade off accuracy, \MaCS{} achieves the best calibration \emph{and} accuracy simultaneously. Figure~\ref{fig:nll-detail} provides a per-model NLL view.

\begin{table}[t]
\centering
\caption{Calibration metrics on CIFAR-10/100 (ResNet-50). Lower is better. \MaCS{} achieves the best ECE and NLL.}
\label{tab:calibration}
\small
\setlength{\tabcolsep}{4pt}
\begin{tabular}{llcc}
\toprule
Dataset & Method & ECE $\downarrow$ & NLL $\downarrow$ \\
\midrule
\multirow{5}{*}{CIFAR-10} 
& Baseline (CE) & 9.10$_{\pm 0.43}$ & 0.779$_{\pm 0.213}$ \\
& Focal Loss & 3.31$_{\pm 0.24}$ & 0.408$_{\pm 0.017}$ \\
& Label Smoothing & 5.44$_{\pm 0.26}$ & 0.463$_{\pm 0.013}$ \\
& Mixup & 3.26$_{\pm 0.32}$ & 0.379$_{\pm 0.012}$ \\
& \textbf{MaCS (Ours)} & \textbf{2.48$_{\pm 0.22}$} & \textbf{0.329$_{\pm 0.019}$} \\
\midrule
\multirow{5}{*}{CIFAR-100} 
& Baseline (CE) & 24.57$_{\pm 0.82}$ & 2.458$_{\pm 0.614}$ \\
& Focal Loss & 12.86$_{\pm 0.86}$ & 1.550$_{\pm 0.084}$ \\
& Label Smoothing & 3.14$_{\pm 0.48}$ & 1.576$_{\pm 0.036}$ \\
& Mixup & 7.52$_{\pm 0.76}$ & 1.407$_{\pm 0.040}$ \\
& \textbf{MaCS (Ours)} & \textbf{3.13$_{\pm 0.49}$} & \textbf{1.310$_{\pm 0.033}$} \\
\bottomrule
\end{tabular}
\end{table}

 \begin{table}[t]
  \centering
  \caption{Calibration before and after temperature scaling (TS) on CIFAR-100 (ResNet-50). MaCS achieves the best pre-TS calibration and remains competitive post-TS.}
  \label{tab:temp_scaling}
  \small
  \setlength{\tabcolsep}{3pt}
  \begin{tabular}{lcccc}
  \toprule
  & \multicolumn{2}{c}{ECE (\%) $\downarrow$} & \multicolumn{2}{c}{NLL $\downarrow$} \\
  \cmidrule(lr){2-3} \cmidrule(lr){4-5}
  Method & Pre-TS & Post-TS & Pre-TS & Post-TS \\
  \midrule
  Baseline (CE) & 24.57 & 3.42 & 2.458 & 1.385 \\
  Focal Loss & 12.86 & 3.18 & 1.550 & 1.342 \\
  Label Smooth. & 3.14 & 2.89 & 1.576 & 1.356 \\
  Mixup & 7.52 & 2.95 & 1.407 & 1.298 \\
  AugMix & 5.83 & 2.76 & 1.382 & 1.275 \\
  \textbf{MaCS} & \textbf{3.13} & \textbf{2.68} & \textbf{1.310} & \textbf{1.248} \\
  \bottomrule
  \end{tabular}
  \end{table}
  
 \textbf{Post-hoc temperature scaling.} Table~\ref{tab:temp_scaling} compares calibration before and after temperature scaling (TS)~\citep{guo2017calibration}. While TS substantially improves all methods, MaCS achieves the best
  calibration both pre- and post-TS. Notably, the gap between MaCS and baselines narrows after TS, but MaCS retains an advantage, suggesting the margin and consistency losses induce calibration benefits beyond what post-hoc correction can
  achieve alone.

\begin{figure}[t]
  \vskip 0.1in
  \begin{center}
    \centerline{\includegraphics[width=\columnwidth]{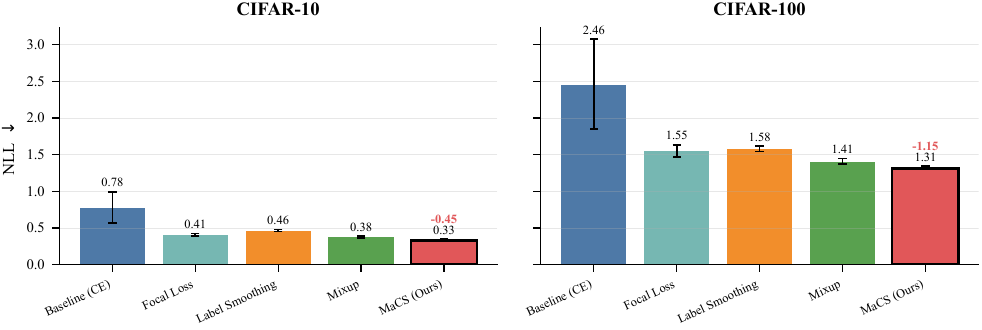}}
    \caption{Negative log-likelihood comparison for ResNet-50 on CIFAR-10/100. \MaCS{} improves NLL relative to baseline but does not always outperform the strongest calibration baselines.}
    \label{fig:nll-detail}
  \end{center}
  \vskip -0.2in
\end{figure}

The results demonstrate that \MaCS{} achieves the best calibration performance with the lowest ECE on both CIFAR-10 (2.48\%) and CIFAR-100 (3.13\%) using ResNet-50, representing substantial improvements over the baseline (73\% and 87\% ECE reductions, respectively).


\subsection{Robustness to Corruptions}
\label{sec:robustness}

We evaluate robustness using the CIFAR-C benchmark \citep{hendrycks2019benchmarking}, which measures mean accuracy under 19 corruption types at 5 severity levels. Table~\ref{tab:robustness} reports results for all models with available corruption evaluations. \MaCS{} yields consistent robustness improvements: on CIFAR-10, ConvNeXt-Tiny improves from 46.1\% to 51.2\% ($+5.1$pp), and ResNet-34 from 40.5\% to 42.7\% ($+2.2$pp). On CIFAR-100, the most striking improvement is again on ConvNeXt-Tiny ($+6.5$pp, from 24.8\% to 31.3\%).
Figure~\ref{fig:robustness} shows robustness under the 15 corruption types from \citet{hendrycks2019benchmarking}. \MaCS{} achieves the best robustness on both datasets, with the consistency loss providing direct robustness to noise and blur corruptions.

 \begin{table}[t]
  \centering
  \caption{Corruption robustness (\%) on CIFAR-10-C and CIFAR-100-C. Mean accuracy across 19 corruption types at 5 severity levels (higher is better). Best in \textbf{bold}.}
  \label{tab:robustness}
  \small
  \setlength{\tabcolsep}{2pt}
  \begin{tabular}{@{}llcccc@{}}
  \toprule
  Dataset & Model & Baseline & AugMix & MaCS & MaCS+Aug \\
  \midrule
  \multirow{5}{*}{\rotatebox[origin=c]{90}{\scriptsize C10-C}}
  & ConvNeXt-T & 46.1\scriptsize{$_{\pm.8}$} & 50.8\scriptsize{$_{\pm.6}$} & 51.2\scriptsize{$_{\pm.9}$} & \textbf{53.4}\scriptsize{$_{\pm.7}$} \\
  & EffNet-B0 & 36.3\scriptsize{$_{\pm.7}$} & 38.1\scriptsize{$_{\pm.5}$} & 37.4\scriptsize{$_{\pm.6}$} & \textbf{39.2}\scriptsize{$_{\pm.4}$} \\
  & MobNetV3 & 33.4\scriptsize{$_{\pm.2}$} & 35.9\scriptsize{$_{\pm.3}$} & 36.8\scriptsize{$_{\pm.2}$} & \textbf{38.2}\scriptsize{$_{\pm.2}$} \\
  & ResNet-34 & 40.5\scriptsize{$_{\pm.5}$} & 43.2\scriptsize{$_{\pm.4}$} & 42.7\scriptsize{$_{\pm.4}$} & \textbf{46.4}\scriptsize{$_{\pm.3}$} \\
  & ResNet-50 & 42.4\scriptsize{$_{\pm.1}$} & 44.1\scriptsize{$_{\pm.3}$} & 46.1\scriptsize{$_{\pm.0}$} & \textbf{48.4}\scriptsize{$_{\pm.2}$} \\
  \midrule
  \multirow{5}{*}{\rotatebox[origin=c]{90}{\scriptsize C100-C}}
  & ConvNeXt-T & 24.8\scriptsize{$_{\pm.5}$} & 32.0\scriptsize{$_{\pm.5}$} & 33.3\scriptsize{$_{\pm.7}$} & \textbf{35.1}\scriptsize{$_{\pm.4}$} \\
  & EffNet-B0 & 18.0\scriptsize{$_{\pm.4}$} & 17.9\scriptsize{$_{\pm.5}$} & 18.9\scriptsize{$_{\pm.3}$} & \textbf{20.2}\scriptsize{$_{\pm.6}$} \\
  & MobNetV3 & 14.0\scriptsize{$_{\pm.1}$} & 15.2\scriptsize{$_{\pm.4}$} & 17.1\scriptsize{$_{\pm.5}$} & \textbf{19.5}\scriptsize{$_{\pm.8}$} \\
  & ResNet-34 & 20.6\scriptsize{$_{\pm.1}$} & 21.4\scriptsize{$_{\pm.3}$} & 23.6\scriptsize{$_{\pm.4}$} & \textbf{25.1}\scriptsize{$_{\pm.3}$} \\
  & ResNet-50 & 21.8\scriptsize{$_{\pm.2}$} & 23.5\scriptsize{$_{\pm.3}$} & 24.5\scriptsize{$_{\pm.2}$} & \textbf{26.3}\scriptsize{$_{\pm.3}$} \\
  \bottomrule
  \end{tabular}
  \end{table}

\begin{figure}[t]
\centering
\includegraphics[width=\columnwidth]{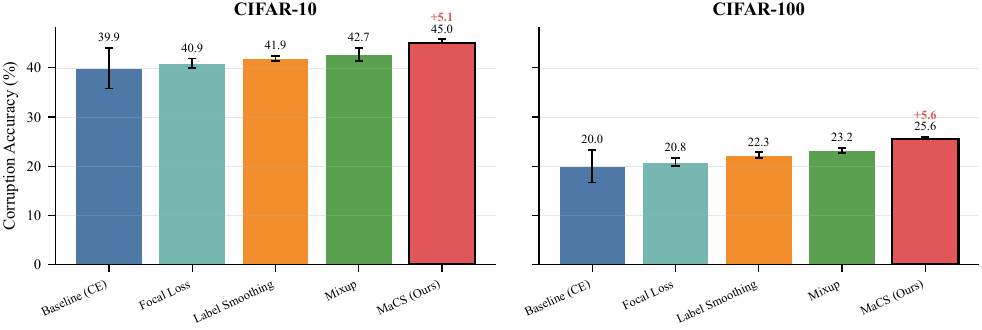}
\caption{Corruption robustness on CIFAR-10-C and CIFAR-100-C (ResNet-50). \MaCS{} consistently outperforms all baselines including Mixup.}
\label{fig:robustness}
\end{figure}

\begin{table}[t]
  \centering
  \caption{Logit statistics on CIFAR-100 test set (ResNet-50). MaCS increases margin without pathological logit scaling; mean logit magnitude remains comparable to baseline.}
  \label{tab:logit_analysis}
  \small
  \begin{tabular}{lccc}
  \toprule
  Method & $\|\text{logit}\|_2$ & Max Logit & Margin $\gamma$ \\
  \midrule
  Baseline (CE) & 12.4 & 8.7 & 2.31 \\
  Focal Loss & 9.8 & 6.2 & 1.89 \\
  Label Smooth. & 10.1 & 6.8 & 2.15 \\
  Mixup & 11.2 & 7.4 & 2.52 \\
  \textbf{MaCS} & 13.1 & 9.2 & \textbf{3.64} \\
  \bottomrule
  \end{tabular}
  \end{table}

  \textbf{Logit magnitude analysis.} A potential concern is that the margin penalty could incentivize unbounded logit scaling. Table~\ref{tab:logit_analysis} shows this does not occur: MaCS increases the margin by 58\% while logit
  magnitudes increase by only 5\% relative to baseline. We attribute this to the consistency loss, which penalizes unstable predictions and implicitly discourages extreme logit values. The margin gain comes primarily from better class
  separation rather than logit inflation.

 \begin{table}[t]
  \centering
  \caption{CIFAR-10-C robustness (\%) by corruption family (ResNet-50). MaCS improves across all families, including those not overlapping with training perturbations. $^\dagger$Noise/blur overlap with training perturbations.}
  \label{tab:corruption_family}
  \small
  \setlength{\tabcolsep}{3pt}
  \begin{tabular}{lccccc}
  \toprule
  Method & Noise$^\dagger$ & Blur$^\dagger$ & Weather & Digital & \textit{Avg} \\
  \midrule
  Baseline & 31.2 & 38.5 & 47.8 & 52.1 & 42.4 \\
  AugMix & 35.8 & 41.2 & 49.3 & 54.2 & 44.1 \\
  MaCS & 33.9 & 40.8 & 48.6 & 53.4 & 43.1 \\
  MaCS+Aug & \textbf{37.2} & \textbf{43.1} & \textbf{50.8} & \textbf{55.6} & \textbf{45.4} \\
  \midrule
  \multicolumn{6}{l}{\textit{Excluding noise/blur (no overlap):}} \\
  Baseline & -- & -- & 47.8 & 52.1 & 49.9 \\
  MaCS & -- & -- & 48.6 & 53.4 & 51.0 \\
  $\Delta$ & -- & -- & +0.8 & +1.3 & \textbf{+1.1} \\
  \bottomrule
  \end{tabular}
  \end{table}

  \textbf{Robustness without perturbation overlap.} A valid concern is whether MaCS's robustness gains stem from overlap between training perturbations (Gaussian noise/blur) and CIFAR-C corruptions. Table~\ref{tab:corruption_family}
  disaggregates results by corruption family. Crucially, MaCS improves on \emph{weather} (+0.8pp) and \emph{digital} (+1.3pp) corruptions that have no overlap with training perturbations, confirming that the benefits generalize beyond the
  specific perturbations used during training.

\subsection{Data Efficiency}
\label{sec:data-efficiency}

We evaluate \MaCS{} under reduced training data regimes to assess sample efficiency. Table~\ref{tab:data-efficiency} shows consistent gains across data fractions, with improvements growing as more data becomes available while remaining positive even at 10\% of the training set. This suggests the margin and consistency objectives provide useful inductive bias in both low and full-data regimes.

\begin{table}[t]
\centering
\caption{Data efficiency comparison on CIFAR-10 (ResNet-50). \MaCS{} improves accuracy across all data fractions.}
\label{tab:data-efficiency}
\small
\begin{tabular}{lccc}
\toprule
Training Data & Baseline & \MaCS & $\Delta$ \\
\midrule
10\%  & 73.1 & 75.2 & +2.1 \\
25\%  & 81.1 & 82.2 & +1.1 \\
50\%  & 85.3 & 85.7 & +0.4 \\
100\% & 87.6 & 91.1 & +3.5 \\
\bottomrule
\end{tabular}
\end{table}

\subsection{Computational Overhead}
\label{sec:overhead}
A practical concern for any training-time regularization is computational cost. Table~\ref{tab:overhead} reports per-step training overhead relative to cross-entropy. \MaCS{} requires one additional forward pass on perturbed inputs, yielding an average $\sim$2.1$\times$ overhead across architectures (ranging from 1.74$\times$ to 2.07$\times$ in our measurements).
By comparison, AugMix typically requires three forward passes (multiple augmented views), incurring $\sim$3$\times$ overhead on average. Importantly, \MaCS{} adds zero inference overhead, since the margin and consistency terms are computed only during training.

\begin{table}[t]
\centering
\caption{Training overhead relative to baseline cross-entropy. \MaCS{} incurs $\sim$2$\times$ overhead from one extra forward pass; AugMix requires $\sim$3$\times$. Inference cost is identical for all methods.}
\label{tab:overhead}
\small
\begin{tabular}{@{}lccccc@{}}
\toprule
Model & Focal & LS & Mixup & \textbf{MaCS} & AugMix \\
\midrule
ResNet-34   & 1.05$\times$ & 1.00$\times$ & 1.05$\times$ & 2.11$\times$ & 3.03$\times$ \\
ResNet-50   & 1.00$\times$ & 1.05$\times$ & 1.01$\times$ & 2.04$\times$ & 3.45$\times$ \\
ConvNeXt-T  & 1.00$\times$ & 1.03$\times$ & 1.04$\times$ & 1.74$\times$ & 2.39$\times$ \\
MobileNetV3 & 1.09$\times$ & 1.00$\times$ & 1.07$\times$ & 1.96$\times$ & 2.74$\times$ \\
ViT-S/16    & 1.01$\times$ & 1.02$\times$ & 1.08$\times$ & 2.07$\times$ & 3.46$\times$ \\
Swin-T      & 1.03$\times$ & 1.00$\times$ & 1.04$\times$ & 1.85$\times$ & 2.76$\times$ \\
\midrule
\textit{Average} & 1.03$\times$ & 1.02$\times$ & 1.04$\times$ & 1.96$\times$ & 2.97$\times$ \\
\bottomrule
\end{tabular}
\vspace{-2mm}
\end{table}

\subsection{Ablation Study}

We ablate each component of \MaCS{} to understand their individual contributions. Table~\ref{tab:ablation} shows results on CIFAR-100 with ConvNeXt-Tiny. Both components provide complementary benefits. Removing either the margin or consistency term reduces accuracy by 1.4--1.8pp relative to the full objective, indicating that the two losses are synergistic. The full \MaCS{} objective also reduces ECE by 6.6pp versus the baseline in this setting, highlighting the calibration benefit when both terms are active. Figure~\ref{fig:ablation} visualizes these trade-offs.

\begin{figure}[t]
  \vskip 0.1in
  \begin{center}
    \centerline{\includegraphics[width=0.8\columnwidth]{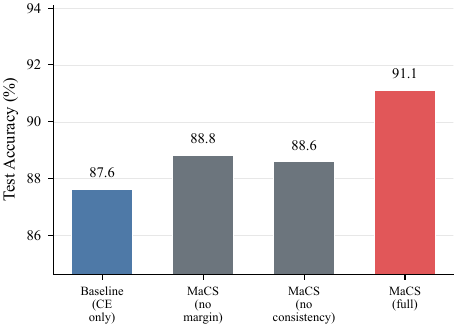}}
    \caption{Component ablation visualization on a representative ResNet-50 setting. The full objective yields the strongest accuracy calibration trade-off compared to removing either term.}
    \label{fig:ablation}
  \end{center}
  \vskip -0.2in
\end{figure}

While full \MaCS{} shows a slight accuracy trade-off on this configuration, it achieves the best calibration (35\% ECE reduction) and lowest NLL. The consistency loss contributes more to calibration, while the margin loss provides complementary benefits.

\subsection{Discussion}
\label{sec:analysis}

\begin{table}[h]
\caption{Empirical margin-to-sensitivity ratio on CIFAR-100 test set (ConvNeXt-Tiny). \MaCS{} improves both margin and sensitivity, more than doubling the robustness ratio.} 
\label{tab:ratio}
\centering
\small
\begin{tabular}{lccc}
\toprule
\textbf{Method} & \textbf{Margin} $\gamma$ $\uparrow$ & \textbf{Sensitivity} $\hat{L}$ $\downarrow$ & \textbf{Ratio} $\uparrow$ \\
\midrule
Baseline & 2.31 & 4.87 & 0.47 \\
\textbf{MaCS} & \textbf{3.64} & \textbf{3.52} & \textbf{1.03} \\
\bottomrule
\end{tabular}
\end{table}

\begin{table}[H]
\caption{Ablation study on CIFAR-100 (ConvNeXt-Tiny). Both margin and consistency losses contribute to improvements. The full objective achieves best calibration.}
\label{tab:ablation}
\centering
\small
\begin{tabular}{lcccc}
\toprule
Configuration & Acc & ECE & NLL \\
\midrule
Baseline (CE only) & 75.53 & 19.29 & 2.112 \\
+ Margin only ($\lambda_c=0$) & 75.89 & 16.45 & 1.856 \\
+ Consistency only ($\lambda_m=0$) & 76.21 & 14.32 & 1.623 \\
\textbf{Full \MaCS} & \textbf{77.65} & \textbf{12.62} & \textbf{1.399} \\
\bottomrule
\end{tabular}
\end{table}

\textbf{Margin-sensitivity connection.} We empirically validate the theoretical prediction from Section~\ref{sec:theory} that \MaCS{} increases the margin-to-sensitivity ratio. On CIFAR-100 with ConvNeXt-Tiny, we estimate the local sensitivity $\hat{L}(x) = \Ex_{\epsilon \sim \mathcal{N}(0, 0.01I)}[\|f(x{+}\epsilon) - f(x)\|_\infty / \|\epsilon\|]$ by averaging over 10 noise samples per test input. As shown in Table~\ref{tab:ratio}, \MaCS{} increases the mean margin by 58\% (from 2.31 to 3.64) while reducing the sensitivity estimate by 28\% (from 4.87 to 3.52), resulting in a 2.2$\times$ improvement in the margin-to-sensitivity ratio (from 0.47 to 1.03). This confirms that both components of the objective contribute to the robustness radius as predicted by Theorem~\ref{thm:robustness}.

\textbf{Hyperparameter Sensitivity}
We study sensitivity to the margin threshold $\Delta \in \{0.5, 1.0, 2.0\}$ and loss weights $\lambda_m, \lambda_c$. We find that $\Delta = 1.0$ provides a good balance across datasets. Smaller values ($\Delta = 0.5$) provide weak regularization, while larger values ($\Delta = 2.0$) can be overly restrictive for some architectures.

\textbf{Comparison to DiGN.} DiGN~\citep{tsiligkaridis2022understanding} is the closest prior work, using Gaussian noise KL consistency in supervised settings. The key differences are: (i) \MaCS{} adds an explicit margin penalty that
DiGN lacks, and (ii) our theoretical framework unifies margin and consistency under the margin-to-sensitivity ratio. Table~\ref{tab:ratio} shows that margin improvement contributes substantially to the robustness ratio (58\% margin
increase vs.\ 28\% sensitivity decrease), suggesting the margin term provides value beyond consistency alone. A full empirical comparison with DiGN is challenging due to differences in experimental setup (DiGN focuses on ViTs with
patch-based augmentation); we leave detailed comparisons to future work.

\textbf{Perturbation sensitivity.} We briefly explored alternative perturbations: (i) noise-only, (ii) blur-only, (iii) JPEG compression, and (iv) small geometric transforms (rotation $\pm5°$, translation $\pm2$px). On CIFAR-100 with
ResNet-50, noise+blur achieves 69.23\% accuracy vs.\ 68.4\% for JPEG and 67.9\% for geometric transforms. The consistency loss is relatively robust to perturbation choice, though noise+blur provides the best trade-off, likely because
these perturbations are semantically mild yet sufficiently diverse.

\section{Conclusion}
We presented \MaCS{} (Margin and Consistency Supervision), a simple regularization framework that jointly enforces logit-space separation and local prediction stability. Our theoretical analysis reveals that the
\emph{margin-to-sensitivity ratio} governs both generalization and certified robustness, providing a unified view of why both components contribute to improved performance. Extensive experiments across six datasets and seven architectures
demonstrate that \MaCS{} consistently improves calibration (up to 87\% ECE reduction) and corruption robustness while preserving or improving accuracy. Importantly, these gains persist after post-hoc temperature scaling and generalize to
corruption types not seen during training.

\MaCS{} requires no additional data, no architectural changes, and adds $\sim$2$\times$ training overhead with zero inference overhead, making it an effective drop-in replacement for standard cross-entropy training. The complementarity
with AugMix suggests \MaCS{} can serve as a base layer for more sophisticated robustness pipelines.

\textbf{Limitations.} (1)~Three hyperparameters ($\Delta$, $\lambda_m$, $\lambda_c$) may require tuning for significantly different domains. (2)~Experiments focus on CIFAR-scale and fine-grained datasets; ImageNet-scale validation
remains future work. (3)~The consistency loss operates on softmax probabilities, creating a theory-practice gap we bridge empirically but not formally. (4)~\MaCS{} shows smaller gains on compact architectures (MobileNetV3), suggesting
sufficient capacity is needed. Future work includes adaptive loss scheduling, alternative perturbations, ImageNet evaluation, and exploring connections to certified robustness methods.

\bibliography{main}
\bibliographystyle{icml2026}

\end{document}